
\documentclass[12pt]{iopart}
\expandafter\let\csname equation*\endcsname\relax
\expandafter\let\csname endequation*\endcsname\relax
\usepackage{lipsum}
\usepackage{graphicx}
\usepackage{graphics}
\usepackage{siunitx}
\usepackage{float}
\usepackage{hyperref}
\usepackage{epstopdf}
\usepackage{enumerate}
\usepackage{cite}
\usepackage{xcolor}
\usepackage{amsmath}
\usepackage{amsthm}
\usepackage{amsfonts}
\graphicspath{{figures/}}
\usepackage{balance}
\hyphenation{op-tical net-works semi-conduc-tor}
\allowdisplaybreaks
\newtheorem{theorem}{Theorem}

\usepackage{iopams}  
\begin{document}
\title[Geometrical Postural Optimisation of 7-DoF Limb-Like Manipulators]{Geometrical Postural Optimisation of 7-DoF Limb-Like Manipulators}

\author{Carlo~Tiseo,~Sydney~Rebecca~Charitos~and~Michael~Mistry}

\address{ECR, Institute of Perception Action \& Behaviour, School of Informatics, University of Edinburgh, Edinburgh, UK}
\ead{carlo.tiseo@ed.ac.uk}
\vspace{10pt}


\begin{abstract}
Robots are moving towards applications in less structured environments, but their model-based controllers are challenged by the tasks' complexity and intrinsic environmental unpredictability. Studying biological motor control can provide insights into overcoming these limitations due to the high dexterity and stability observable in humans and animals. This work presents a geometrical solution to the postural optimisation of 7-DoF limbs-like mechanisms, which are robust to singularities and computationally efficient. The theoretical formulation identified two separate decoupled optimisation strategies. The shoulder and elbow strategy align the plane of motion with the expected plane of motion and guarantee the reachability of the end-posture. The wrist strategy ensures the end-effector orientation, which is essential to retain manipulability when nearing a singular configuration. The numerical results confirmed the theoretical observations and allowed us to identify the effect of different grasp strategies on system manipulability. The geometrical method was numerically tested in thousands of configurations proving to be both robust and accurate. The tested scenarios include left and right arm postures, singular configurations, and walking scenarios. The proposed geometrical approach can find application in developing efficient and robust interaction controllers that could be applied in computational neuroscience and robotics. 
\end{abstract}

\section{Introduction} \label{sec: Introduction}
Robots intervention in our society is expanding, moving from the traditional production cells towards more unstructured application scenarios, which greatly increases the computational complexity for model-based algorithms \cite{xin2020optimization,ijspeert2013dynamical,Averta2020,ferrolho2019residual,moura2019equivalence,Wolfslag2020,yan2021decentralized,mitrovic2010adaptive,Li2018}. These methods require extended state knowledge (i.e., system and the environment states) to formulate the optimisation. Thus, any unpredicted environmental change that cannot be regarded as noise will at best compromise the action optimality or, worse, affect the system stability. Leading researchers focus on the development of optimisation methods with relaxed optimality constraints to enhance robustness\cite{xin2020optimization,ferrolho2019residual,Ferrolho2020,Kronander2016,nakanishi2011stiffness,Mastalli2020,Wolfslag2020}. 

Singular configurations are still a challenge to optimisation methods' numerical stability, but the configurations also maximise the effort routed through the two adjacent links' rigidity constraint reducing the joint torque \cite{siciliano2010robotics,moura2019equivalence}. For example, anyone can verify that walking with flexed knees requires more effort than locking these joints in the extended configuration. However, the singular configurations' numerical instability induces most legged robots to walk with flexed knees. Developing a method that is robust to singularity to identify optimal postural configurations might greatly increase the efficiency of the selected strategy.

Human limbs often inspire the mechanical structure of legged robots and manipulators with three main links and 7 degrees of freedoms (DoF), but they are often designed with a higher motion range \cite{Mastalli2020,Ferrolho2020,siciliano2010robotics}. Such design choice is mainly related to the historical need to have a general-purpose manipulating platform that could be easily reprogrammed to execute different tasks without adjusting their base placement. Notwithstanding, having a large workspace is not an issue when dealing with mobile robots where a limited range of motion is compensated by its ability to move its own base. 

\begin{figure}[t]
    \centering
    \includegraphics[width=\textwidth,trim=0cm 0.5cm 0cm 0.5cm, clip]{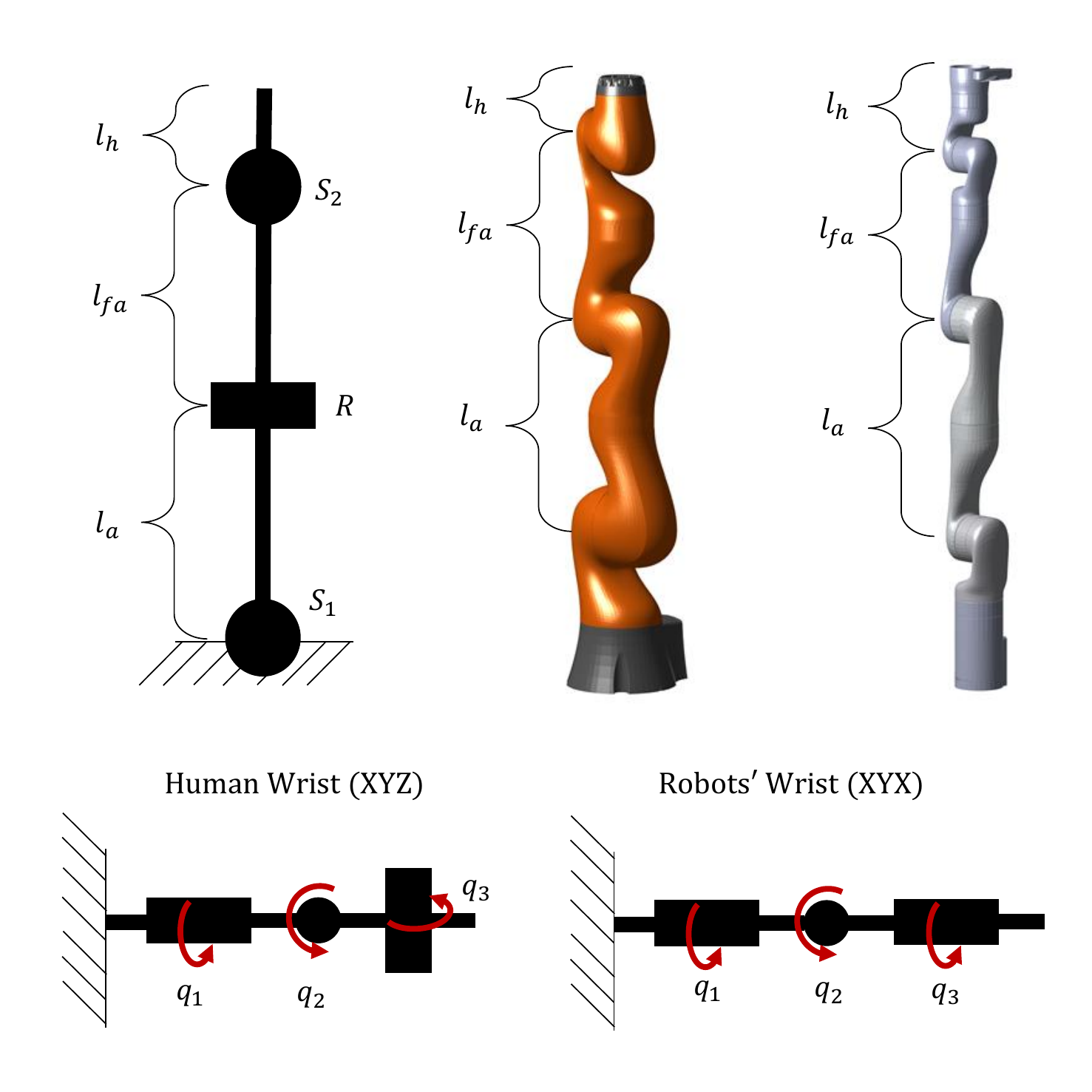}
    \caption{A generic 7 degrees of freedoms (DoF) mechanism with aligned Spherical-Revolute-Spherical joint configurations is on the left, and the equivalent representations of a Kuka IIWA14 and a Kinova Gen 3 are on the centre and the rightmost, respectively. The mechanisms all have a spherical ($S_1$) joint connecting the robot base to the arm ($l_\text{a}$). A revolute joint (R) connecting the arm and the forearm ($l_\text{fa}$). A second spherical joint ($S_2$) connecting the forearm to the hand ($l_\text{a}$). The main difference between these manipulators and the human arm is their $\mathrm{XYX}$ wrist's joints configuration; meanwhile, the human wrist's joints configuration is $\mathrm{XYZ}$.}
    \label{fig:Kinematics7dof}
\end{figure} 

This paper aims to identify a geometrical solution to postural optimisation for 7-DoF mechanisms that could control these manipulators without employing numerical optimisation.

\section{Preliminaries}
The proposed method exploits the kineto-static duality to identify an optimal pose for 7-DoF mechanisms. Currently, solving this problem requires solving either inverse kinematics or non-linear dynamics optimisation \cite{siciliano2010robotics,moura2019equivalence,Angelini2019}. However, these methods have some intrinsic limitations due to the projection matrices involved in the optimisation problem formulation. This section provides an overview of the knowledge required to understand and contextualise the proposed method\cite{aghili2015projection,nakanishi2008international,de2005operational,featherstone1997load, khatib1983dynamic,dietrich2018hierarchical,de2006task,dehio2018modeling}.

The Fractal Impedance Controller is a recently introduced control framework, enabling interaction control in redundant manipulators without relying on null-space projections \cite{tiseo2020,babarahmati2019,babarahmati2020,Tiseo2020Bio,tiseo2020Planner}. The FIC is an asymptotically stable conservative field that can be defined in non-linear spaces (e.g., quaternion, spherical), enabling to define simply connected energetic manifolds in generalised coordinates \cite{tiseo2020,babarahmati2019,babarahmati2020,Tiseo2020Bio,tiseo2020Planner}. Once the task-space controller has been calibrated for the system's physical properties, it is globally stable, and it does not require observing the null-space energy to guarantee its stability\cite{babarahmati2019,Tiseo2020Bio}. 

\subsection{Model-Based Optimisation of Redundant Robots}
Model-based optimisation finds applications in multiple branches of engineering, including planning and control for robotics applications. In the eighties and early nineties, the development of null-space projections enabled the exploitation of robots' redundancy by defying cost functions capable of selecting optimal solutions for a set of optimisation criteria \cite{moura2019equivalence,de2005operational,featherstone1997load, khatib1983dynamic,dietrich2018hierarchical}. The steep increase in computational power that we have experienced in the last three decades enabled the solution of complex non-linear optimisation problems, enabling robots' development in unstructured environments \cite{xin2020optimization,Xin2020,ferrolho2019residual,Ferrolho2020,Mastalli2020}. Notwithstanding, these methods still have robustness issues related to numerical instability close to singularities and models' inaccuracies \cite{moura2019equivalence,xin2020optimization}. There are multiple formulations derived over the years and the details about their derivations are not needed for the scope of this manuscript. Thus, we remand the reader to the specific literature on the subject, which can be found in \cite{moura2019equivalence,de2005operational,featherstone1997load, khatib1983dynamic,dietrich2018hierarchical,xin2020optimization}.

\subsubsection{Null-Space Projections}
Null-Space projectors are mathematical transformations that define a subset of the system domain in generalised coordinates (i.e., joint coordinates) that, ideally, do not project information (i.e., energy) in the task-space. They are deployed to formulate the robots' task-space models used for kinematics and dynamics optimisation problems that aim to minimise joint space (i.e., joint torques) without affecting the task-space action. Over the years, multiple formulations for the projectors have been proposed. These methods have been proven to be equivalent solutions, but they might have different numerical stability and complexity due to different numerical conditioning. The general formulation of the inverse projection for an $n$-dimensional system in generalised coordinates (i.e., $q\in \mathcal{R}^n$) is:
\begin{equation}
\label{eq:ProjInvDyn}
P=(I-A^\#A)
\end{equation}
where $I \in \mathcal{R}^{n\times n}$ is an identity matrix, $A$ is the extended Jacobian for the system and the task constraints, and $A^\#$ is a generalised inverse of $A$. 
The extended Jacobian identifies a desirable posture by introducing arbitrary constraints in the tangent space of the robot's manifold, which can be projected in generalised coordinates using null-space projections \cite{xin2020optimization}. The constraints can be introduced by different tasks, organised in a hierarchical structure based on the task priority. However, hierarchical projection limits the solution of domain with a higher risk of incurring numerical instability \cite{xin2020optimization,Wolfslag2020,moura2019equivalence}. 

In summary, the null-space projections can be used to formulate optimisation algorithms to maximise the alignment between the expected task and the geodesics of the robot's manifold, which minimise the energy expenditure in the task. 

A common trend in formulations that exploit null-space projections begin with the separation of the dynamics in generalised coordinates into two separate equations before developing different formulations \cite{moura2019equivalence,Xin2020}. The first equation describing the null-space projection of the constraints:
\begin{equation}
\label{eq:JSDynamics}
P M\ddot{q}+ Ph = \tau_{c} 
\end{equation}
where $M$ is the joint space inertia matrix, $h$ is the non-linear dynamics and $\tau_\text{c}$ are the joint torque producing a task-space motion. The second equation describing the component of the dynamics generating a motion affecting the task:
\begin{equation}
\label{eq:NSDynamics}
(I-P) M\ddot{q}+ (I-P)h + \tau_\text{NS}= \tau_{m}
\end{equation}
where $I$ is an identity matrix, $\tau_\text{NS}$ are the torques generated by the tasks' constraints in the Null-Space, and $\tau_{m}$ are the joint torque required to control the null-space. Therefore, the total torque command can be written as $\tau=\tau_{c}+\tau_{m}$ and implies that the controllers' observer has to track the energy in both sub-spaces to guarantee stability. 

\subsubsection{Implications of Projected Dynamics on Port-Hamiltonian and Whole-Body controllers Stability}
Port-Hamiltonian controllers are commonly known as Admittance and Impedance controllers\cite{hogan2018impedance,Averta2020,Tiseo2020Bio,Tiseo2018}. Their task-space implementations often rely on null-space projections to fully control and optimise the robot posture \cite{Averta2020,babarahmati2019,xin2020optimization}. However, null-space projections can be substituted in back-derivable systems by introducing a weak potential field in the joint space pulling the robot towards a reference configuration \cite{Averta2020}. 

Whole-body controllers are used in planning and control loco-manipulation tasks in mobile robotics, where null-space projections are used to formulate the system equations used to generate the control commands\cite{xin2020optimization,Xin2020,Mastalli2020,tiseo2018modelling,tiseo2018bipedal}. 

Port-Hamiltonian and whole-body controllers stability can be compromised by singular configurations or the deterioration of the extended Jacobian ($A$) rank in both methods, implying that  $P$ is not a valid base for the two sub-spaces defined in \autoref{eq:JSDynamics} and \autoref{eq:NSDynamics} \cite{moura2019equivalence,Xin2020,dietrich2018hierarchical}. Consequently, it compromises the possibility of performing the numerical integration required by the controllers' observers to integrate the power ($\dot{E}=\nabla E(x)\dot{x}$) into energy ($E(x)$). 

Singularity cases are less critical because they can be addressed by reducing the workspace. Thus, singularities are mainly responsible for removing optimal solutions involving the exploitation of singular configurations. Meanwhile, the deterioration of the extended Jacobian ($A$) rank can be caused either by the failure of any assumption made for the tasks' modelling or the system equations becomes linearly dependent, generating numerical instability \cite{moura2019equivalence,Kronander2016}. These events are much more difficult to detect in highly variable environments and play a key role in reducing these methods' robustness. Nonetheless,  multiple methods have been proposed to extend their robustness by increasing the margin of stability by introducing slack to the solution and/or refining the optimisation problem \cite{xin2020optimization,Xin2020,Wolfslag2020,Angelini2019,yan2021decentralized,Li2018,ferrolho2019residual,Ferrolho2020}. These solutions often introduce higher computational cost, and they retain a degree of susceptibility to the failure of the assumptions made for the model formulation.

\subsection{Kineto-Static \& Postural Optimisation} \label{sec: Method}
The kineto-static duality is a well-known method that exploits the orthogonality between the ability of kinematic chains to generate a force or a velocity, which can be evaluated from the mechanism's geometrical Jacobian \cite{siciliano2010robotics}. The geometrical Jacobian ($J$) describes the relationship between the end-effector tangential velocity and joint velocities \cite{siciliano2010robotics}. Thus, the rigidity constraints of the kinematic chain are mainly acting along the orthogonal direction, which is also the most efficient direction to apply force in a given configuration. The kineto-static duality is expressed by the following two equations which applies to the virtual work principle between joint-space and task-space of the robot.
\begin{equation}
\begin{array}{rl}
  \dot{x}=   & J\dot{q}  \\
  \tau=      & J^T h_\text{e}
\end{array}
\label{eq:KinetoStaticDuality}    
\end{equation}
where $\dot{x}$ is the Cartesian space velocity, $\dot{q}$ is the joint space velocity, $\tau$ are the joint torques, and $h_\text{e}$ is the Cartesian Force.

\subsection{Manipulability and Force Ellipsoids \& Polytopes}
The manipulability and force ellipsoids are geometric representations of the system capabilities obtained using the base of the tangent space obtained via differential kinematics and the kineto-static duality in \autoref{eq:KinetoStaticDuality} \cite{siciliano2010robotics}. The manipulability ellipsoid is obtained by studying the solution of the characteristic polynomial obtained from the differential kinematics when the joint velocities belong to a unit hyper-sphere.
\begin{equation}
\dot{q}^T\dot{q}=\dot{x}^T(JJ^{T})^{-1}\dot{x}=1
\label{eq:ManElli}    
\end{equation}
\autoref{eq:ManElli} describes the deformation of the hyper-sphere when projected through the robot kinematics. Thus, the eigenvalues ($\zeta_\text{m} \in \mathcal{R}^n$) of the matrix $(JJ^{T})^{-1}$ define the scaling factor of the hyper-sphere in the task-space direction described by the associated eigenvector. 

The force ellipsoids are similarly obtained starting from a torque unit hyper-sphere in the joint space leading to the following equation.
\begin{equation}
\tau^T\tau=h_\text{e}^T(JJ^{T})h_\text{e}=1
\label{eq:ForceElli}    
\end{equation}
\autoref{eq:ForceElli} describes the deformation of the torque hyper-sphere when projected through the robot kinematics. Consequently, the force ellipsoids have the axis directions (i.e., eigenvectors) of the manipulability ellipsoids. In contrast, the eigenvalues ($\zeta_\text{f} \in \mathcal{R}^n$) are equal to the inverse of the respective eigenvalues of manipulability ellipsoids.
\begin{equation}
\zeta_{\text{f}i}=\zeta_{\text{m}i}^{-1}~\forall~i\in\left[1,n\right], ~i \in\mathcal{N}
\label{eq:eigenvalues}    
\end{equation}

For a long time, this property has been exploited in numerical optimisation to identify the better posture that aligns the task requirements with the eigenvectors defined by \autoref{eq:ManElli} and \autoref{eq:ForceElli} to maximise the robot performances. 

Recently, polytopes have been applied in the formulation of the optimisation problem in robotics to increase robustness \cite{Wolfslag2020,ferrolho2019residual,Ferrolho2020}. They are a more accurate representation of the robot capabilities than the ellipsoid, but they are also computationally more expensive \cite{ferrolho2019residual}. Differently from the ellipsoids, they are usually computed using the torque limits of the robot. Notwithstanding, the ellipsoids are an inner approximation of the polytope obtained from a unit hyper-sphere in the joint space \cite{ferrolho2019residual}. Therefore, this polytope provides a better description of the robot capabilities (for all the Cartesian directions) that are not aligned with the eigenvectors determined by the robot Jacobian. However, it converges to the same values of the ellipsoid when intersecting the eigenvectors \cite{ferrolho2019residual}. 

\subsection{The Fractal Impedance Controller}
The Fractal Impedance Controller is a recently introduced passive controller that generates an asymptotically stable force field around the desired state \cite{babarahmati2019, Tiseo2020Bio}. The FIC is robust delays and low-bandwidth in the feedback loop  due to its path independent energy, implying that multiple controllers can be added in parallel or series without affecting stability \cite{babarahmati2019,babarahmati2020,tiseo2020Planner}. Its autonomous harmonic trajectories have upper-bounded energy and maximum power, guaranteeing global asymptotic stability in all the work-space in fixed-base robots and asymptotic stability within the base of support in mobile robots \cite{Tiseo2020Bio,tiseo2020Planner}. The controller has been successfully deployed in manipulation, teleoperation, human-robot cooperation, model predictive control for motion planning, and computational neuroscience \cite{babarahmati2019,babarahmati2020,tiseo2020, Tiseo2020Bio,tiseo2020Planner,tiseo2021,tiseo2021exploiting}. 

The stability characteristics of the FIC allows the formulation of stable controllers for redundant manipulators without using null-space projections \cite{tiseo2020}. However, to remove them, an additional task-space controller acting on the elbow generates virtual mechanical constraints enabling the complete postural control of the robot. However, the method still employed kinematic postural optimisation to identify the desired posture of the robot. The experiments with a Kuka LWR4 proved that such a method could accurately execute an interaction control on a 7-DoF robotic arm without knowing the robot's dynamics. Although this interaction controller does not incur a catastrophic failure due to the singularity and numerical instability of the kinematic optimisation, they still impede the exploitation of a particular configuration \cite{tiseo2020}. 

\subsection{The Task Separation Principle}
The task Separation Principles (SP) has been proposed as an explanation of how human motor control handles redundancy of the body \cite{guigon2007computational,tommasino2017extended,tommasino2017task,tiseo2018modelling}. Despite it being implemented using different formulations, they all agree there are two separate controllers in the human nervous system. The first controller is a static controller that handles quasi-static predictable force fields (e.g., gravity and body intrinsic mechanical impedance) and drives the postural optimisation exploiting the body redundancy. Meanwhile, the second controller is handling a velocity-dependent force field and perturbations \cite{guigon2007computational}. 

A well-known method for its implementation is the combination of non-linear inverse optimisation and impedance controllers to generate stable force fields controlling both the task and the null-space. Tommasino and Campolo recently develop a Passive Motion Paradigm (PMP) implementation of the SP, known as $\lambda_0-$PMP \cite{tommasino2017extended,tommasino2017task}. They proved that a postural strategy could be regressed from human movement data via non-linear inverse optimisation to generate a holonomic manifold (in the generalised coordinates), rendering the controller's energy path independent. These methods are similar to the dynamics primitives concept developed from robotics, where a robot learns to control its state via the regression of an optimal command strategy from the desired task-space behaviour \cite{flash2005motor,ijspeert2013dynamical,tommasino2017task,tommasino2017task}. Consequently, these approaches to the SP produce task dependant control strategies, and similarly to the dynamics primitives, they do not provide a general solution. Another option to formulate the SP is the optimal control implementations that offer a better generality at a higher cost due to online optimisation. Guigon proposed an optimal control architecture for the SP in \cite{guigon2007computational}, capable of generating control commands containing both the static and dynamic controller output, which is less expensive compared to earlier versions. 

The authors have recently proposed an SP architecture that exploits the stability properties of the FIC to generate human-like reaching movements on a planar arm \cite{tiseo2021}. We have also shown that a similar architecture can be used in quaternion coordinates to control wrist pointing coordinates \cite{tiseo2021exploiting}. The desired Cartesian trajectories were controlled by projecting the pointing direction in spherical coordinates. Meanwhile, the redundancy DoF introduced by the wrist pronosupination is controlled by the rotation around the pointing direction. Unlike previous methods, its exploitation of the FIC stability allows control over redundant manipulators without employing optimisation or regressing a movement strategy from human data. 

\section{A Geometrical Solution to Postural Optimisation for 7-DoF Robots}
The algebraic and geometrical solution to inverse kinematics problems are well known to robotics and deployed in low-dimensional non-redundant manipulators, despite similar inverse kinematics for 7-DoF manipulator with aligned Spherical-Revolute-Spherical joint structures \cite{siciliano2010robotics,benati1982inverse,gong2019analytical}. Some examples are: human limbs, manipulators using a DLR manipulator (Kuka, Baxter, Kinova, etc.), and humanoids robots limbs (e.g. NASA Valkyrie). 

These methods have limited application due to the need to employ numerical optimisation to plan and control task-space motions in redundant robots. The recent development in task-space control and planning made using the FIC, discussed in the previous section, open the possibility of controlling this type of manipulators without requiring numerical optimisation. To do so, a method to identify the desired posture is required to generate the task-space controllers that are capable of fully controlling the robot.

This section will show that it is sufficient and necessary to optimise the robot posture to maximise a specific alignment of the tangent space base (eigenvectors of the matrix $JJ^T$). We will introduce a hierarchical numerical optimisation for this problem and a specific geometrical solution for the postural optimisation of 7-DoF manipulators.  

\subsection{Postural Optimisation via the Geometric Jacobian}
The geometric Jacobian describes the properties of the robot tangent space. As shown by the kineto-static duality, it describes both the capability of the system of generating effort (i.e., force and torques) and flow (i.e., velocities) at the end-effector. The analysis of the eigenvector of the matrix $JJ^T$ are a base frame for the robot manifold tangent space, while the eigenvalues associated with manipulability and force ellipsoids are metrics for these properties. Therefore, accounting for the direction of end-effector flow ($\hat{v}$) and maximum effort ($\hat{h}_\text{e}$) is a sufficient condition for the identification of an optimal posture for a specific task. This property can be easily verified from the robot dynamics in a generalised coordinate space, which shows that the geometric Jacobian is present in the description of all the elements with the exception of the null-space torque. 
\begin{theorem}
Let our system be a redundant, fully actuated backdrivable mechanism controlled by a stable superimposition of task-space controllers; then an optimal posture is a necessary and sufficient condition for motion optimality. 
\end{theorem}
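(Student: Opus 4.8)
The plan is to establish the biconditional by unpacking what "motion optimality" means for a stable superimposition of task-space controllers and then showing that the geometric Jacobian $J$ is the only object that couples joint-space effort to task-space behaviour, so that optimising the posture (i.e. the eigenstructure of $JJ^T$) is exactly what is needed. I would first fix definitions: by "optimal posture" I mean a configuration $q^\star$ that maximises the alignment of the end-effector flow direction $\hat v$ and maximum-effort direction $\hat h_\text{e}$ with the eigenvectors of $JJ^T$ (equivalently, the axes of the manipulability and force ellipsoids of \autoref{eq:ManElli}--\autoref{eq:ForceElli}); by "motion optimality" I mean minimisation of the actuation effort $\tau$ required to realise the prescribed task-space motion/interaction, given that the controller is passive and fully actuated.

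For the \emph{necessity} direction ($\Rightarrow$: motion optimal $\implies$ posture optimal) I would argue by contraposition using the kineto-static duality \autoref{eq:KinetoStaticDuality}. If the posture is not optimal, then along the task direction $\hat h_\text{e}$ the relevant eigenvalue $\zeta_{\text{f}i}$ of $JJ^T$ is strictly smaller than its attainable maximum; by \autoref{eq:eigenvalues} and \autoref{eq:ForceElli} this means $\tau^T\tau$ needed to produce that task effort is strictly larger than at $q^\star$, contradicting motion optimality. The key structural input here is the observation already stated in the paper: in the generalised-coordinate dynamics \autoref{eq:JSDynamics}--\autoref{eq:NSDynamics}, $J$ (via $P$, $\tau_c$, and the task torques) appears in every term \emph{except} the pure null-space torque $\tau_m$, so the only freedom that changes task cost without changing the task itself is the posture, which acts through $JJ^T$.

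For the \emph{sufficiency} direction ($\Leftarrow$: posture optimal $\implies$ motion optimal) I would use the stability/passivity hypothesis on the superimposed task-space controllers. Because the system is backdrivable and the controllers are a stable superimposition (each an asymptotically stable conservative field, per the FIC discussion), the null-space torque $\tau_m$ can be taken to be a weak configuration-regulating potential that does not inject energy into the task (the Port-Hamiltonian substitution noted after \autoref{eq:NSDynamics}); hence the total cost decomposes and the task-relevant part is governed entirely by $J^T h_\text{e}$. Minimising that over postures is, by the ellipsoid/polytope characterisation, exactly achieved at $q^\star$, so posture optimality delivers motion optimality. I would close by noting redundancy is what makes the posture a free variable at all (so the statement is non-vacuous), and full actuation guarantees $q^\star$ is actually commandable.

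The main obstacle I anticipate is making "stable superimposition of task-space controllers" precise enough that the energy decomposition between task and null-space is rigorous rather than heuristic — in particular, ruling out cross-terms in $\dot E$ so that the task cost genuinely factors through $JJ^T$ alone. This is where the path-independence and conservative-field properties of the FIC (upper-bounded energy, no null-space observation required for stability) must be invoked carefully; once that decomposition is secured, both implications follow from \autoref{eq:ManElli}--\autoref{eq:eigenvalues} essentially by the spectral argument above.
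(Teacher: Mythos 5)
Your necessity step contains a concrete error that would make the argument fail as written. You claim that if the posture is suboptimal then the eigenvalue of $JJ^T$ along the effort direction $\hat{h}_\text{e}$ is strictly smaller than its attainable maximum, and that by \autoref{eq:ForceElli} this forces a larger $\tau^T\tau$. But the force-ellipsoid relation is $\tau^T\tau = h_\text{e}^T(JJ^{T})h_\text{e}$, so a \emph{smaller} quadratic form along $\hat{h}_\text{e}$ means \emph{less} joint torque is required for the same task force, not more. Efficient force production is obtained where $\hat{f}_\text{e}^T J_\text{P}J_\text{P}^T\hat{f}_\text{e}$ is minimised, which is exactly why the paper's hierarchical problem \autoref{NumOptimisation} minimises that term while maximising $\hat{v}^T J_\text{P}J_\text{P}^T\hat{v}$, and why near-singular (knee-locked) configurations are advantageous for effort. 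Your spectral contraposition therefore argues in the wrong direction; at minimum the inequality must be flipped and the force and velocity directions treated asymmetrically (minimise along $\hat{f}_\text{e}$, maximise along $\hat{v}$), a distinction that your definition of an optimal posture as generic ``alignment with the eigenvectors of $JJ^T$'' does not capture.

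Beyond that, your route differs from the paper's, and your sufficiency direction leans on precisely the point you flag as unresolved. The paper does not argue through the ellipsoids at all: it writes the dynamics as the gradient of the Lagrangian, expresses all mechanical efforts as gradients of energetic manifolds, and obtains the power identity \autoref{Power}, in which the total power splits into task-space terms of the form $J_i^T\nabla U_{TSi}J_i\dot{q}$ plus the null-space term $\nabla U_\text{NS}\dot{q}$. Motion optimality is then identified with $\nabla U_\text{NS}\dot{q}=0$, and since every remaining term is shaped by the geometric Jacobian, i.e.\ by the posture, the posture is the only controllable handle on the task-relevant power, which yields necessity and sufficiency simultaneously. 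Your sufficiency step instead \emph{assumes} the clean task/null-space energy split (``the null-space torque does not inject energy into the task''), which is essentially the conclusion of the paper's power decomposition rather than a consequence of the FIC hypotheses alone; as you acknowledge, without securing that decomposition your argument does not close.
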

\begin{proof}
The robot dynamics is described starting from the robot's Lagrangian.
\begin{equation}
    \mathcal{L}=\mathrm{U}-\mathrm{K}
\end{equation}
where $\mathrm{U}$ is the potential energy and $\mathrm{K}$ is the kinetic energy \cite{siciliano2010robotics}. The force field associated with the energy manifold is described by the gradient of the Lagrangian $\mathcal{L}$. It is worth noting that is equal to $\nabla\mathcal{L}=\tau=\tau_c+\tau_m$ obtained by the summation of \autoref{eq:JSDynamics} and \autoref{eq:NSDynamics}.
\begin{equation}
\nabla \mathcal{L}=M\dot{q}+ B+ \sum_{i=1}^{N_\text{CoM}} J^T_{\text{CoM}i} F_{\text{g}i} + \sum_{j=1}^{N_\text{World}} J^T_{\text{e}j} h_{\text{e}j} +  \sum_{j=1}^{N_\text{Ctr}} J^T_{\text{C}k} F_{\text{C}k} + \tau_\text{NS}=0
\label{dynamicsEq}   
\end{equation}
where M is the inertia matrix, B is the non-linear dynamics, $\sum J^T_{\text{CoM}} F_{\text{g}}$ describes the gravity, $\sum  J^T_{\text{e}} F_{\text{e}}$ are the interactions with the world (i.e., contacts), $\sum J^T_{\text{C}} F_{\text{C}}$ is the contribution of all the controllers active in the robot, and $ \tau_\text{NS}$ are the null space torques and mechanical losses. It is worth nothing that both M and B also contain the geometric Jacobian in their formulation. Equation \ref{dynamicsEq} can be rewritten once again as:
\begin{equation}
\nabla \mathcal{L}=\nabla U_{MB}+ \sum_{i=1}^{N_\text{CoM}} J^T_{\text{CoM}i} \nabla U_{\text{g}i} + \sum_{j=1}^{N_\text{World}} J^T_{\text{e}j} \nabla U_{\text{e}j} +  \sum_{j=1}^{N_\text{Ctr}} J^T_{\text{C}k} \nabla U_{\text{C}k} + \nabla U_\text{NS}=0
\label{dynamicsEqV2}   
\end{equation}
where we expressed the mechanical efforts in the equation as a gradient of their energetic manifolds.

Considering that $\nabla U_{MB}$ is the gradient of the robot kinetic energy \cite{siciliano2010robotics}, we can now write the instantaneous power of the robot dynamics as:
\begin{equation}
\nabla \mathcal{L}\dot{q} = \left(\sum J^T_i \nabla U_{TSi}\dot{x}_{i}\right) + \nabla U_\text{NS}\dot{q} =\left(\sum J^T_i \nabla U_{TSi} J_i\dot{q}\right) + \nabla U_\text{NS}\dot{q}
\label{Power}   
\end{equation}
This equation clearly shows that the optimal solution (i.e., minimum energy expenditure) is obtained by having a perfect projection of the task-space power in the joint space (i.e., $\nabla U_\text{NS}\dot{q}=0$), which is associated with a static equilibrium in the null-space. In other words, the optimal strategy nullifies any joint-space movement that is not required to be performed by the task. Considering that \autoref{Power} describes the system power, and the robot's posture determines the only controllable components via the geometric Jacobian. Therefore, an optimal posture is a sufficient and necessary condition for optimality.
\end{proof}

Despite postural optimisation being the sole guarantor of optimality, it also imposes directional constraints. Meanwhile, the feasibility and the stability of a task also require tracking the robot's energy and limiting the magnitude of the required power. These conditions are frequently added as constraints in the optimisation problem, and are often related to increased optimisation costs \cite{Averta2020,xin2020optimization,dietrich2018hierarchical,Ferrolho2020,Mastalli2020,yan2021decentralized,Wolfslag2020}. However, the FIC provides trajectory independent energy tracking and limits the power; thus, postural optimisation is necessary and sufficient for having a stable optimal control on an architecture based on the FIC \cite{babarahmati2019,tiseo2020}.

\subsubsection{Postural Optimisation via Numerical Optimisation}
The hierarchical postural optimisation for a generic manipulator can be formulated as follows:
\begin{equation}
\begin{array}{l}
    \min \left(\hat{f}_\text{e}^T J_\text{P}\left(q\right) J_\text{P}^T\left(q\right) \hat{f}_\text{e}\right)  \\
    \begin{array}{ll}
\text{subject to:}&\\
& ||\mathcal{K}\left(q\right)-x_\text{d}||\le \lambda\\
& \arg \max \left(\hat{v}^T J_\text{P}\left(q\right) J^T_\text{P}\left(q\right) \hat{v}\right) 
    \end{array}
\end{array}
\label{NumOptimisation}
\end{equation}
where $\hat{f}_\text{e}$ is the direction of the end-effector task's force, $J_\text{P}$ are the first 3 rows of the Jacobian describing the linear components, $\mathcal{K}$ is the direct kinematics, $x_\text{d}$ is the desired end-effector pose, $\lambda$ determines the task accuracy, and $\hat{v}$ is the direction of the end-effector task's velocity. Alternatively, the hierarchical optimisation could be formulated prioritising the velocity generation by inverting the order, resulting in:
\begin{equation}
\begin{array}{l}
     \max \left(\hat{v}^T J_\text{P}\left(q\right) J^T_\text{P}\left(q\right) \hat{v}\right) \\
    \begin{array}{ll}
\text{subject to:}&\\
& ||\mathcal{K}\left(q\right)-x_\text{d}||\le \lambda\\
& \arg  \min \left(\hat{f}_\text{e}^T J_\text{P}\left(q\right) J^T_\text{P}\left(q\right) \hat{f}_\text{e}\right) 
    \end{array}
\end{array}
\label{NumOptimisationB}
\end{equation}
The other two options for the optimisation are using the direction of the end-effector twist ($\hat{t}$) or wrench ($\hat{w}$), where there is no need for performing a hierarchical optimisation because they are 6D-dimensional vectors. The optimisation for the twist is:
\begin{equation}
\begin{array}{l}
    \max \left(\hat{t}^T J\left(q\right) J^T\left(q\right) \hat{t}\right)  \\
    \begin{array}{ll}
\text{subject to:}&\\
& ||\mathcal{K}\left(q\right)-x_\text{d}||\le \lambda 
    \end{array}
\end{array}
\label{TwistOptimisation}
\end{equation}
The optimisation for the wrench is:
\begin{equation}
\begin{array}{l}
    \min \left(\hat{w}^T J\left(q\right) J^T\left(q\right) \hat{w}\right)  \\
    \begin{array}{ll}
\text{subject to:}&\\
& ||\mathcal{K}\left(q\right)-x_\text{d}||\le \lambda
\end{array}
\end{array}
\label{WrenchOptimisation}
\end{equation}
\begin{figure}[!ht]
    \centering
    \includegraphics[width=\textwidth,trim=0cm 6cm 2cm 2cm, clip]{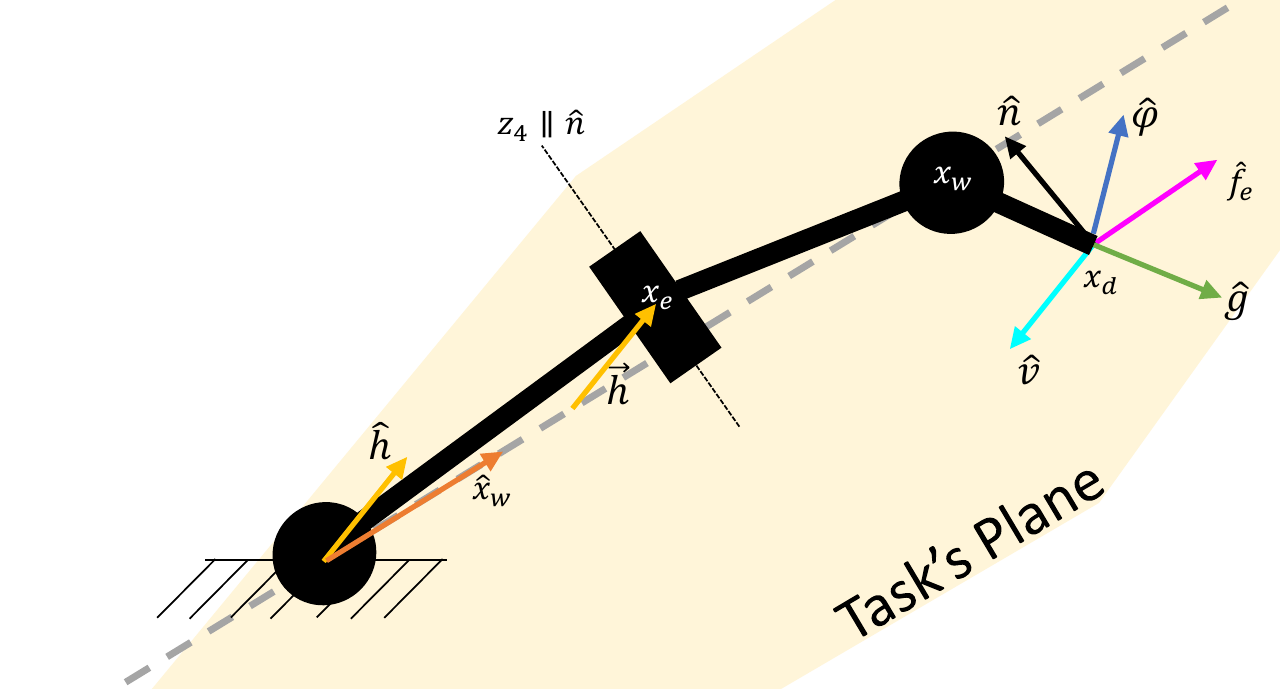}
    \caption{The task's force ($\hat{f}_\text{e}$) and velocity ($\hat{v}$) directions define the task's plane defining the normal direction ($\hat{n}$). The shoulder joint maximises the alignment of the elbow joints ($z_4$) with $\hat{n}$. The $x_\text{w}$ is determined moving back from $x_\text{d}$ in the grasp direction ($\hat{g}$) for a length equal to length of the third link ($l_\text{h}$). The pronosupination $\hat{\phi}$ is a redundant degree of freedom controlling the orientation of the end-effector around $\hat{g}$.}
    \label{fig:Concept}
\end{figure}

\subsubsection{Geometrical Optimisation for 7-DoF Kinematics}
Mechanisms that have a Spherical-Revolute-Spherical joint arrangement are 3-D versions of a 3-link planar arm with the ability to change the plane of motion. Therefore, the inverse kinematics and postural optimisation are the same of a 3-link arm once the direction for the end-effector force and velocity are assigned, as exemplified in \autoref{fig:Concept}. Consequently, the posture can be optimised only and only if there are no requirements for the grasping direction. Whereas, if the grasping direction is defined, the remaining two configurations are of the left and the right arm. This fact can be easily experimentally verified by taking a full cup of a liquid, choosing a grasping direction in space and trying arbitrary to change the elbow position while moving the cup without spilling the liquid. It goes without saying that the use of any of the additional 20-DoF provided by the hand is forbidden in the experiment. Lastly, it shall be noted that the left and right arm redundancy is also present for any other solution of the proposed optimisation method. However, it is not a problem because switching from one to the other will require special manoeuvring to extend the elbow joint fully. Furthermore, the range of motion is limited to one of the two configurations in human beings.

Given a mechanism having three links of lengths $l_\text{a}$, $l_\text{fa}$ and $l_\text{h}$, respectively. Being $ \hat{n}=\hat{v}\times\hat{f}_\text{e}$ the unit vector orthogonal to the task's plane. The optimal posture can be identified deriving the positions of the wrist $x_\text{w}$ and elbow $x_\text{e}$. 
\begin{equation}
    x_\text{w}=x_\text{d}-l_\text{h}\hat{g}
    \label{WristPosition}
\end{equation}
where $x_\text{d}$ is the desired end-effector position, and $\hat{g}$ is the grasping direction. The derivation of elbow position is immediate if $||x_\text{w}||\ge l_\text{a}+l_\text{fa}$ being $x_\text{e}=l_\text{a}\hat{x}_\text{d}$, where $\hat{x}_\text{d}$ is the direction of the vector connecting the mechanism base frame to the desired end-effector. The position of the elbow of a left arm can be derived as follows:
\begin{equation}
\begin{array}{l}
        \vec{h}=\hat{x}_\text{w}\times n\\
        \hat{h}=\cfrac{\vec{h}}{||\vec{h}||}\\
        k=\cfrac{l_\text{fa}}{l_\text{a}}\\
        l_\text{a}^{'}=\cfrac{||x_\text{w}||}{2}+\cfrac{l_\text{a}^2(1-k^2)}{2||x_\text{w}||}\\
        \mu=\sqrt{l_\text{a}^{'~2}-l^2}\\
      x_\text{e}=\begin{cases}
            -\mu\hat{h}+l_\text{a}^{'}\hat{x}_\text{w}\text{, }\left(\hat{h}_2<0~\&~x_\text{d}\ge0\right) \wedge \left(\hat{h}_2\le0~\&~x_\text{d}\le0\right) \\
            +\mu\hat{h}+l_\text{a}^{'}\hat{x}_\text{w}\text{, Otherwise}
            \end{cases}
\end{array}
\label{geomOptim}
\end{equation}
where $\hat{h}_i$ is the $i^{th}$ element of the unit vector $\hat{h}$. The right arm formulation of \autoref{geomOptim} can be obtained inverting the signs of $\mu\hat{h}$ between the two conditions of $x_\text{e}$. As mentioned earlier, the solution still presents the redundancy for $\hat{h}_2=0$, which implies a top and a bottom solution, which has to be treated on a case by case basis. However, if we consider an arm moving in the Earth's gravitational field, the lower elbow posture is usually preferable.
  
In the event of an unconstrained grasping direction, the problem can be rewritten considering that the optimal configuration will be the alignment of both the hand and the forearm with the force direction ($\hat{g}=\hat{x}_\text{w}$). Consequently, the \autoref{geomOptim} can be rewritten as follows:
\begin{equation}
\begin{array}{l}
        \vec{h}=\hat{x}_\text{d}\times n\\
        \hat{h}=\cfrac{\vec{h}}{||\vec{h}||}\\
        k=\cfrac{l_\text{fa}+l_\text{h}}{l_\text{a}}\\
        l_\text{a}^{'}=\cfrac{||x_\text{d}||}{2}+\cfrac{l_\text{a}^2(1-k^2)}{2||x_\text{d}||}\\
        \mu=\sqrt{l_\text{a}^{'~2}-l^2}\\
            x_\text{e}=\begin{cases}
            -\mu\hat{h}+l_\text{a}^{'}\hat{x}_\text{w}\text{, }\left(\hat{h}_2<0~\&~x_\text{d}\ge0\right) \wedge \left(\hat{h}_2\le0~\&~x_\text{d}\le0\right) \\
            +\mu\hat{h}+l_\text{a}^{'}\hat{x}_\text{w}\text{, Otherwise}
            \end{cases}\\
        x_\text{w}=l_\text{fa}\cfrac{x_\text{d}-x_\text{e}}{||x_\text{d}-x_\text{e}||}+x_\text{e}
\end{array}
\label{geomOptimB}
\end{equation}
It is worth noting that in the case that $||x_\text{d}||\ge l_\text{a}+l_\text{fa}+l_\text{h}$ the elbow position is $x_\text{e}=l_\text{a}\hat{x}_\text{d}$, and the wrist position is $x_\text{w}=l_\text{fa}\hat{x}_\text{d}+x_\text{e}$. 

\subsection{The Inverse Kinematics of Human Limbs}
\begin{figure}[!ht]
    \centering
    \includegraphics[width=\textwidth,trim=0cm 0cm 0cm 0cm, clip]{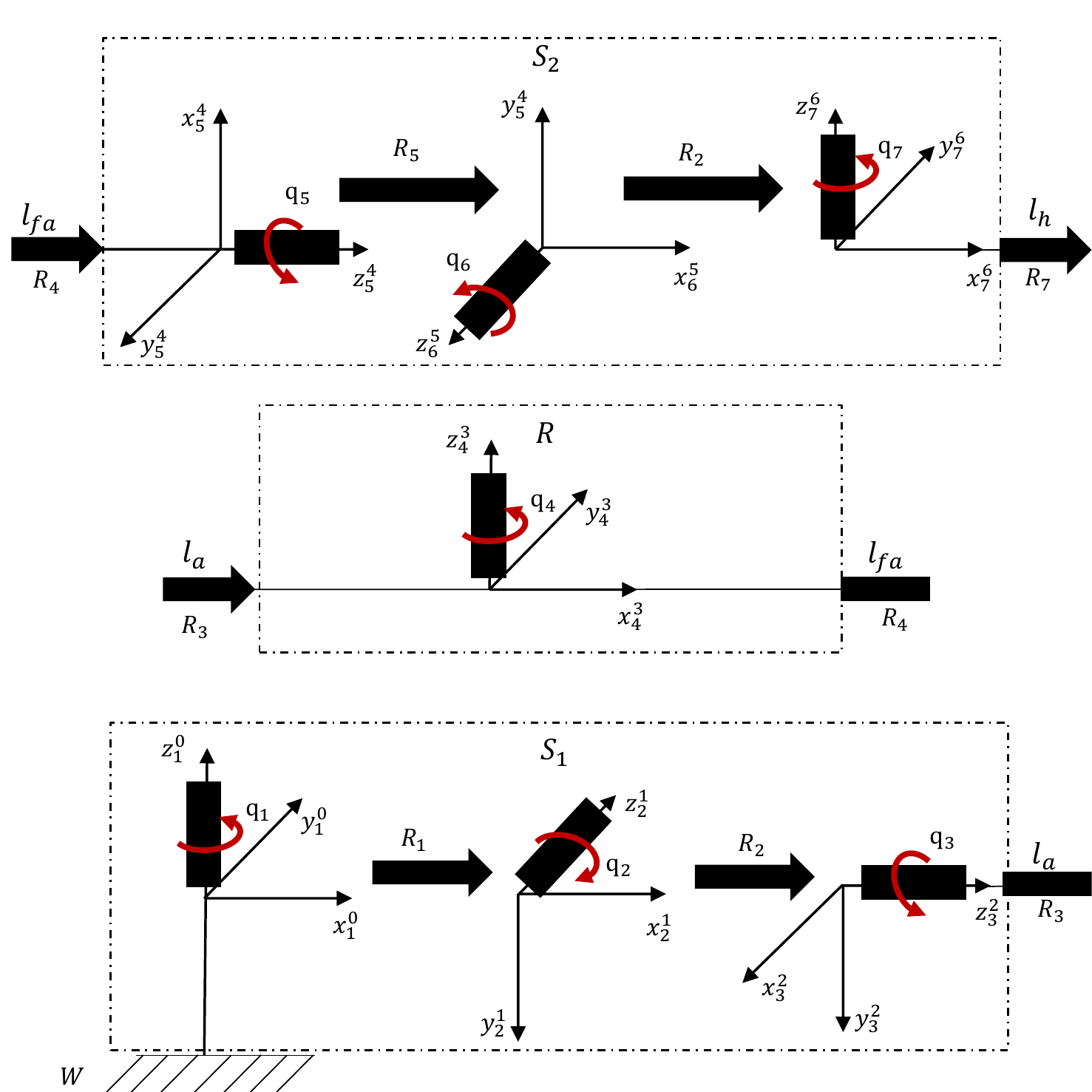}
    \caption{The $S_1$, $R$, and $S_2$ from \autoref{fig:Kinematics7dof} have been exploded to show the reference frames used for the formulation of the direct kinematics. $S_1$ is described with a $\mathrm{ZYX}$ Euler angles transformation and is connected to the world reference (W). Meanwhile, $S_2$ is described by a $\mathrm{XYZ}$ Euler angles transformation.}
    \label{fig:KinematicsChain}
\end{figure}
Human limbs differ from the majority of manipulators due to the joints' arrangement of the wrist, which have a $\mathrm{XYX}$ configuration, that allows to decouple the end-effector roll from the wrist pronosupination. In contrast, human joints are organised as $\mathrm{XYZ}$ that couples the end-effector roll to the pronosupination, as shown in \autoref{fig:KinematicsChain}. The different joint arrangement also implies that the Inverse Kinematic (IK) solution identified for these robot cannot be directly applied to identify the joints' configuration. Let's define $\hat{\phi}$ as the unit vector describing the desired wrist torsion, which is required to define the limb posture. The equations for the shoulder joints are:
\begin{equation*}
    \begin{array}{l}
         q_1=\arctan(\hat{x}_\text{e}\cdot R^0_{1y},\hat{x}_\text{e}\cdot R^0_{1x})  \\
         R_1=R^0_1\left(\begin{array}{ccc}
              \cos\left(q_1\right)&-\sin\left(q_1\right) &0 \\
              \sin\left(q_1\right)&\cos\left(q_1\right)&0\\
              0&0&1\end{array} \right)\\
         R^1_2=\left(R_{1x}~~-R_{1z}~~R_{1y}\right)
    \end{array}
\end{equation*}
\begin{equation}
    \begin{array}{l}
         q_2=\arctan(\hat{x}_\text{e}\cdot R^1_{2y},\hat{x}_\text{e}\cdot R^1_{2x})\\
         R_2=R^1_2\left(\begin{array}{ccc}
              \cos\left(q_2\right)&-\sin\left(q_2\right) &0 \\
              \sin\left(q_2\right)&\cos\left(q_2\right)&0\\
              0&0&1\end{array} \right)\\
        R^2_3=\left(-R_{2z}~~R_{2y}~~R_{2x}\right)
    \end{array}
    \label{eq:IKShoulder}
\end{equation}
\begin{equation*}
    \begin{array}{l}
         q_3=\arctan(\hat{x}_\text{e}\cdot R^2_{3y},\hat{x}_\text{e}\cdot R^2_{3x})\\
        R_3=R^2_3\left(\begin{array}{ccc}
              \cos\left(q_3\right)&-\sin\left(q_3\right) &0 \\
              \sin\left(q_3\right)&\cos\left(q_3\right)&0\\
              0&0&1\end{array} \right)\\
       R^3_4=\left(R_{3z}~~-R_{3x}~~-R_{3y}\right)
    \end{array}
\end{equation*}
where $q_i$ is the $i^{th}$ joint angle, $R_i$ is the rotation matrix of the base from of the $i^{th}$ link, and $R_i^{i-1}$ is the base frame of the $i^{th}$ joint.

The equation for the elbow joint is:
\begin{equation}
    \begin{array}{l}
       q_4=\arctan((\hat{x}_\text{w}-\hat{x}_\text{e})\cdot R^3_{4y},(\hat{x}_\text{w}-\hat{x}_\text{e})\cdot R^3_{4x})\\
        R_4=R^3_4\left(\begin{array}{ccc}
              \cos\left(q_4\right)&-\sin\left(q_4\right) &0 \\
              \sin\left(q_4\right)&\cos\left(q_4\right)&0\\
              0&0&1\end{array} \right)\\
       R^4_5=\left(R_{4z}~~-R_{4y}~~R_{4x}\right)\\
    \end{array}
    \label{eq:IKElbow}
\end{equation}
The equation for the wrist joints are:
\begin{equation*}
    \begin{array}{l}
       \alpha=\hat{g}\times(\hat{\phi}\times{\hat{g}})\\
       q_5=\arctan(\alpha\cdot R^4_{5y},\alpha\cdot R^4_{5x})\\
        R_5=R^4_5\left(\begin{array}{ccc}
              \cos\left(q_5\right)&-\sin\left(q_5\right) &0 \\
              \sin\left(q_5\right)&\cos\left(q_5\right)&0\\
              0&0&1\end{array} \right)
    \end{array}
    \label{eq:Wrist}
\end{equation*}

\begin{equation}
    \begin{array}{l}
       q_6=\arctan(\hat{g}\cdot R^5_{6y},\hat{g}\cdot R^5_{6x})\\
        R_6=R^5_6\left(\begin{array}{ccc}
              \cos\left(q_6\right)&-\sin\left(q_6\right) &0 \\
              \sin\left(q_6\right)&\cos\left(q_6\right)&0\\
              0&0&1\end{array} \right)\\
       R^6_7=\left(R_{6x}~~-R_{6z}~~R_{6y}\right)
    \end{array}
\end{equation}

\begin{equation*}
    \begin{array}{l}
       q_7=\arctan(\hat{g}\cdot R^6_{7y},\hat{g}\cdot R^6_{7x})\\
       R_7=R^6_7\left(\begin{array}{ccc}
              \cos\left(q_7\right)&-\sin\left(q_7\right) &0 \\
              \sin\left(q_7\right)&\cos\left(q_7\right)&0\\
              0&0&1\end{array} \right)\\
    \end{array}
\end{equation*}
\section{Method Characterisation}
The links' lengths use in all experiments are $l_{a}=\SI{.37}{\meter}$, $l_{fa}=\SI{.32}{\meter}$, and $l_{h}=\SI{.10}{\meter}$. The experiments were performed in Matlab 2020a, using an Intel i7-7700HQ with \SI{16}{\giga B} of RAM.The proposed postural optimisation method has been tested in multiple scenarios to evaluate robustness and computational performances. 

\subsection{Robustness and Computational Performances}
In the first instance the ability to exploit the redundancy with and without assigning $\hat{g}$ in both the left and right arm configurations is analysed. To do so we assign a desired end-effector position $x_\text{d}=\left(0.5~~0~~0\right)\si{\meter}$, and randomly generate $500$ velocity unit vectors ($\hat{v}$) using the random function. The force direction ($\hat{f}_\text{e}$) is taken orthogonal to $\hat{v}$. The grasp direction $\hat{g}=\left(1~~0~~0\right)$ when it is provided as input to the algorithm. The data shown in \autoref{fig:LRArms} indicate the proposed method is capable of exploiting the redundancy for both the left and right arm configurations. The second robustness test evaluates the method stability in computing postured for $1000$ randomly assigned end-effector positions. The point is chosen from a spherical domain having a radius $1.2$ times bigger than the maximum reach of the arm. The $\hat{v}$ and $f_\text{e}$ are selected similarly to the previous experiment. The grasp $\hat{g}$ direction is assigned randomly. The total computational time for all the postures is \SI{30}{\milli \second}, which implies an average time for a single posture of \SI{30}{\micro\second}. \autoref{fig:ArmSphere} shows all the solutions generated during this latest test.
\begin{figure}[t]
    \centering
    \includegraphics[width=\textwidth,trim=0cm 0cm 0cm 0cm,clip]{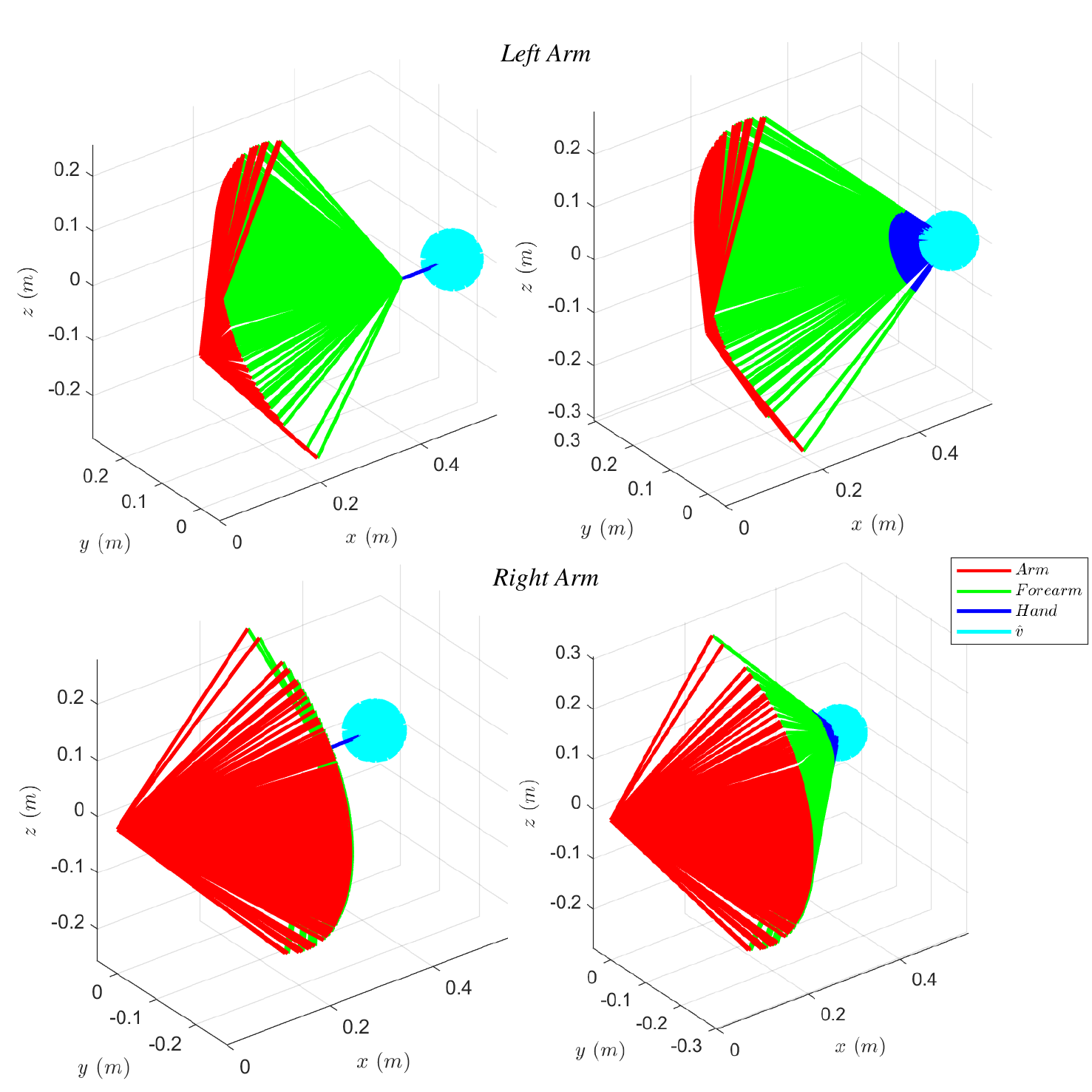}
    \caption{Left and Right arms redundant postures obtained for $x_\text{d}=\left(0.5~~0~~0\right)\si{\meter}$ associated with a random unit vector $\hat{v}$. On the left column the grasp direction is fixed to $\hat{g}=\left(1~~0~~0\right)$ and generated using \autoref{geomOptim}. On the right column, the grasp direction is not assigned ($\hat{g}=\left(0~~0~~0\right)$) and the postures are generated using \autoref{geomOptimB}.}
    \label{fig:LRArms}
\end{figure} 

\begin{figure}[t]
    \centering
    \includegraphics[width=.9\textwidth,trim=3cm 9cm 4.5cm 9cm,clip]{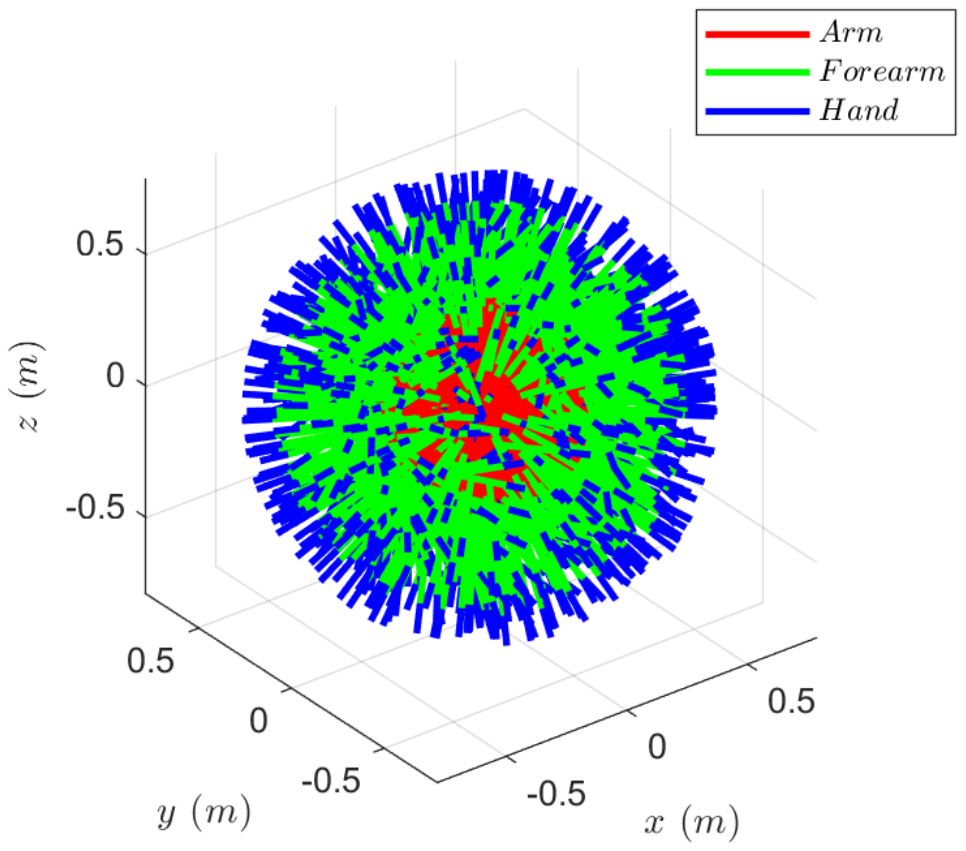}
    \caption{Visualisation of the $1000$ random end-effector positions used to evaluate the robustness of the proposed method. The results show that no error occurred and they were computed in \SI{30}{\milli \second}. }
    \label{fig:ArmSphere}
\end{figure} 

\subsection{Geometrical Inverse Kinematics}
To evaluate the accuracy of the proposed IK, $1000$ end-effector positions ($x_\text{d}$) are generated and associated to a random set of orthogonal $\hat{v}$ and $\hat{f}_\text{e}$ of unit vector. The grasp direction ($\hat{g}$) are chosen aligned to $\hat{v}$, while the desired hand pronosupination are chosen to align $\hat{\phi}$ with $\hat{f}_\text{e}$. The postural optimisation is used to derive the desired elbow ($x_\text{e}$) and wrist ($x_\text{w}$) positions. The error position error at the elbow, wrist and hand are evaluated in combination with the orientation errors for the grasp and the pronosupination.
\begin{equation}
    \label{IKerror}
    \begin{array}{l}
         E_\text{E}=\left\|x_\text{e}-x_\text{IK-e}\right\|   \\
         E_\text{W}=\left\|x_\text{w}-x_\text{IK-w}\right\|   \\
         E_\text{H}=\left\|x_\text{d}-x_\text{IK-d}\right\|    \\
         E_{\hat{g}}=\arccos(R_{7x}\cdot\hat{g})   \\
         E_{\hat{\phi}}=\arccos(R_{7z}\cdot\hat{\phi})  
    \end{array}
\end{equation}
where $x_\text{IK-e}=R_3 \left(0~~0~~l_\text{a}\right)^T$, $x_\text{IK-w}=R_4 \left(l_\text{fa}~~0~~0\right)^T+x_\text{IK-e}$ and $x_\text{IK-d}=R_7 \left(l_\text{h}~~0~~0\right)^T+x_\text{IK-w}$. The recorded values for all the mean and standard deviations of all the errors are negligible, showing that the proposed method is highly accurate. The cumulative computational time for the $1000$ postural optimisation and IK problems are \SI{78}{\milli\second}, which implies an average computational time of \SI{78}{\micro\second} per end-effector position.

\subsection{Grasp Selection \& Postural Optimisation}
Having established that the manipulability of these types of robots are equivalent to the planar 3-link arm once the adequate plane of motion is selected. It implies that manoeuvrability is mainly affected by the distance of the end-effector from the robot base. This section evaluates of the impact of the grasp strategy ($\hat{g}$) on the robot manipulability for the different extension of the arm ($d$). We consider the following three cases:
\begin{enumerate}
    \item N-Grasp: absence of a grasp strategy (\autoref{geomOptimB})
    \item F-Grasp: $\hat{g}$ aligned with the force direction ($\hat{g}=-\hat{f}_\text{e}$)
    \item V-Grasp: $\hat{g}$ aligned with the velocity direction ($\hat{g}=\hat{v}$)
\end{enumerate}
The manipulability properties for each desired end-effector position considering 36 equispaced orthogonal combinations of velocity $\hat{v}$ and force $\hat{f}_\text{e}$, which implies $\angle\hat{g}\hat{x}=\theta_\text{g} \in \left[0,360\right]\si{deg}$. The metrics used for the comparison is the effect on the values is:
\begin{equation}
    C= \cfrac{\hat{f}_\text{e}^T J_\text{P}J_\text{P}^T\hat{f}_\text{e}}{\hat{v}^T J_\text{P}J_\text{P}^T\hat{v}}
    \label{eq:C}
\end{equation} 
The results in \autoref{fig:C-grasp} indicate that the scaling of these values are mainly driven from the distance from the robot base. The curves also show that the V-Grasp reaches the best results in reducing $(\hat{f}_\text{e}^T J_\text{P}J_\text{P}^T\hat{f}_\text{e})$ while increasing $(\hat{v}^T J_\text{P}J_\text{P}^T\hat{v})$, especially in the extended positions where controlling the grasp direction is essential to retain some manipulability. G-grasp is recommendable when there is no predominant directionality of the task, probably due to the neutral configuration of the wrist joints. Lastly, F-Grasp has the worst performance due to a more remarked directionality than the other two cases without providing any significant benefit. A clear example when the V-Grasp is a preferable strategy is locomotion, where the limb is mostly pushing on the ground and working close to the singularity, as shown in \autoref{fig:WalkingExample}. The results further verify the robustness to the singularity of the proposed method. More importantly, these results clarify the difference between ($\hat{f}_\text{e}$, $\hat{v}$) and ($\dot{x}_d$, $\hat{h}_\text{e}$). ($\hat{f}_\text{e}$, $\hat{v}$) are the desired orientation of the physical property of the robot; meanwhile ($\dot{x}_d$, $\hat{h}_\text{e}$) describe physical variables at the end-effector. For example, in \autoref{fig:WalkingExample}, $\hat{v}$ is the orthogonal direction with respect to the expected ground reaction force, and it is derived aligning $\hat{f}_\text{e}$ with the expected direction of the ground interaction forces. Meanwhile, $\dot{x}_d$ describes the direction of motion of the leg in space. It is worth noting that $\hat{v}$ and $\hat{f}_\text{e}$ do not have to be orthogonal to identify a suitable strategy, and if they are parallel, an orthogonal vector can be used to define the plane of motion.

\begin{figure}[t]
    \centering
    \includegraphics[width=\textwidth,trim=2cm 6cm 4.5cm 6cm,clip]{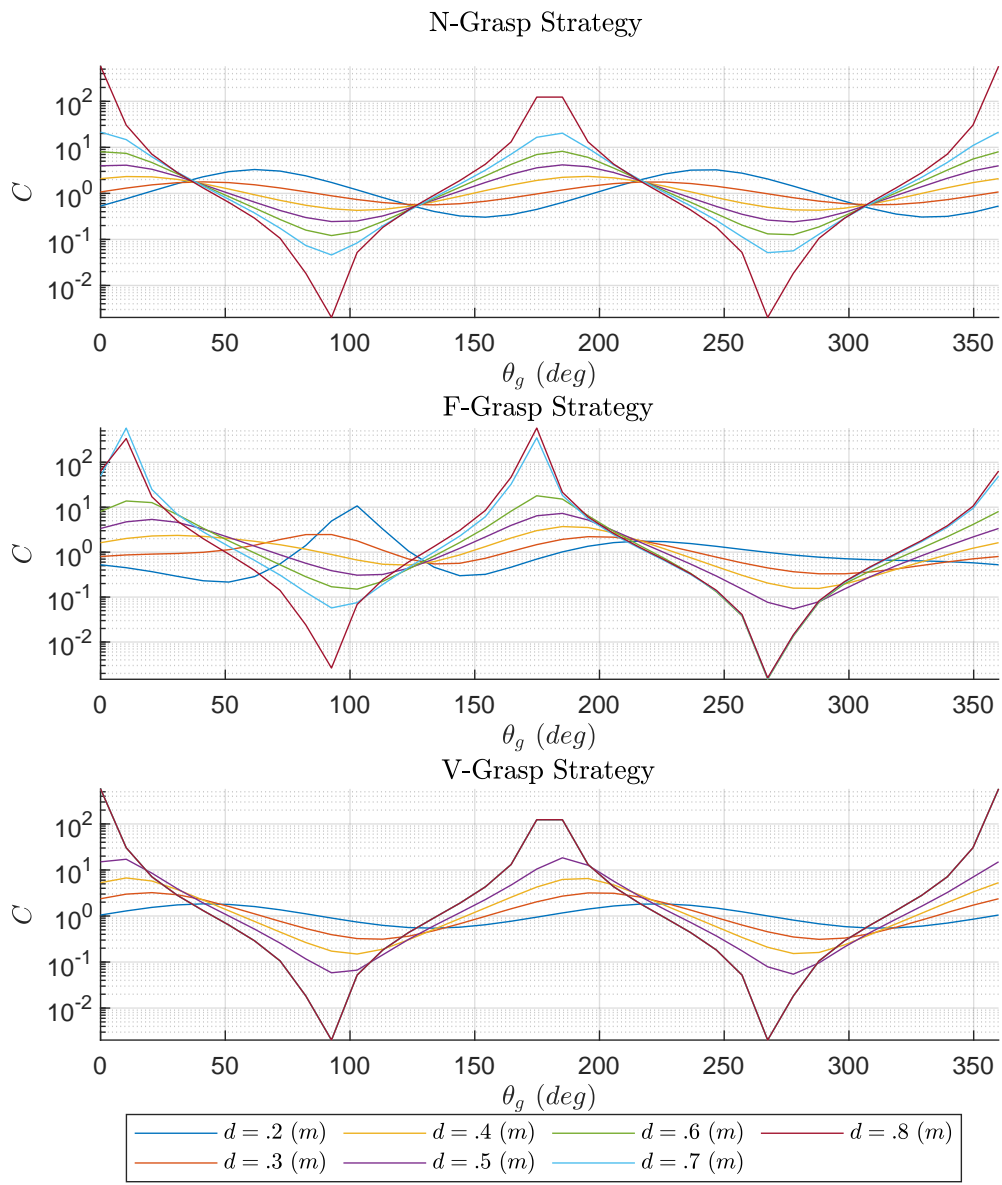}
    \caption{C is defined in \autoref{eq:C} and describes the ratio between the force and manipulability characteristics based on the direction of interaction of $\hat{v}$ and $\hat{f}_\text{e}$. Therefore, a value of 1 indicates a balanced performance; the peaks indicate unbalanced performances. A value closer to zero is preferable because it implies a minimisation of the numerator and a maximisation of the denominator of C. The N-Grasp and V-Grasp have smoother transitions compared to F-Grasp. They are also less susceptible to the velocity direction. Meanwhile, the F-Grasp transitions are steeper, and it has a more pronounced asymmetry with higher minima.}
    \label{fig:C-grasp}
\end{figure} 

\begin{figure}[t]
    \centering
    \includegraphics[width=\textwidth,trim=2cm 8.5cm 3cm 8.5cm,clip]{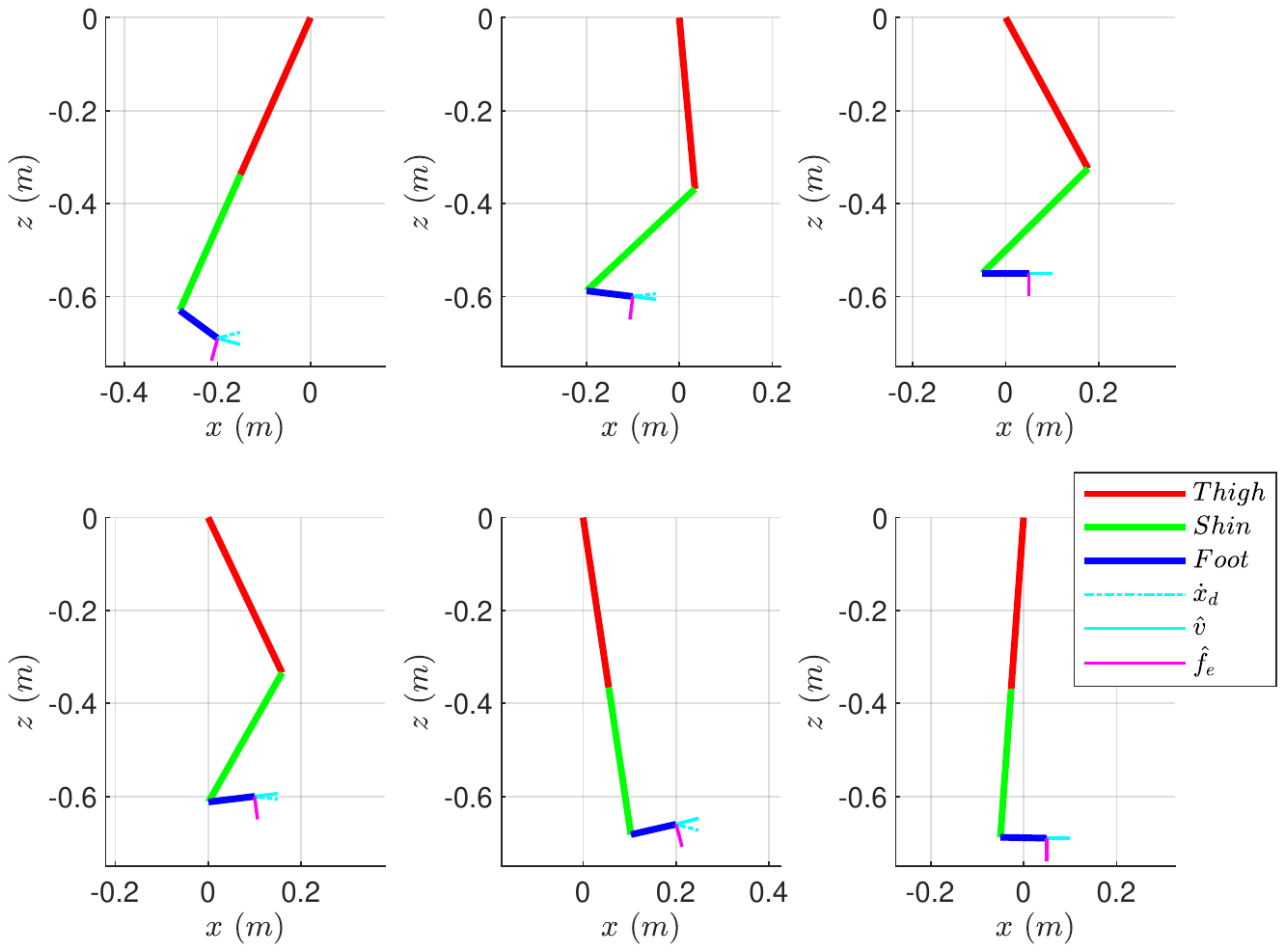}
    \caption{Six snapshots of a walking trajectory are shown: the toe-off strategy is on the top-left corner. It is followed by three intermediate stages of the swing trajectory showing the ankle inversion before showing the heel-strike and the middle point of the support phase. It shows how a proper selection of $f_\text{e}$ and $\hat{v}$ allows exploiting the singularity while retaining manipulability. An indicative end-effector velocity $\dot{x}_\text{d}$ is also shown to help visualise its difference from $\hat{v}$.  }
    \label{fig:WalkingExample}
\end{figure}

\section{Discussion}
The results confirm that the proposed method is a robust solution for postural optimisation and inverse kinematics on limb-like mechanisms. The data in \autoref{fig:LRArms} prove that the proposed method can be applied to either the right or left arm configuration. \autoref{fig:ArmSphere}, \autoref{fig:C-grasp} and \autoref{fig:WalkingExample} confirmed the robustness of the geometrical postural optimisation to singularity. The effects of different grasp strategies on the system manipulability in \autoref{fig:C-grasp} shows that in the absence of task constraints to $\hat{g}$, it is recommendable to keep the wrist in a neutral position (N-Grasp) on the centre of the limb reach. Meanwhile, V-Grasp should be employed in contracted and extended postures to improve the manipulability at the end-effector, as shown in  \autoref{fig:WalkingExample}. The computational time recorded in our simulations indicate computational times in the order of hundreds of microseconds in Matlab, which could be potentially reduced even further by implementing the code using a language such as C++. 

The presented geometrical approach provides an alternative to the inversion of the robot dynamics and numerical optimisation for solving the redundancy problem in 7-DoF manipulators with an aligned Spherical-Revolute-Spherical joints' configuration. Although this method can simplify the problem for current planning and control algorithms relying on inverse projections, the major benefit will come from exploiting the system singular configuration that requires removing the inverse matrices from the problem formulations. Our earlier work shows that it is possible to control redundant manipulators in the task-space without relying on inverse projections \cite{tiseo2020}, and future integration with the proposed postural optimisation might prove beneficial for interaction robustness in robotics. Furthermore, \autoref{Power} proves that an optimal postural optimisation is a sufficient and necessary condition for optimality. The main limitation is the absence of joint constraints. It could be easily added in the IK formulation by introducing a saturation of the joint variables, but it cannot be directly accounted for in the postural optimisation. However, this can be addressed by using a mapping of the mechanism workspace to identify accessible posture. Furthermore, this is not an issue for the task-space impedance controllers deployed in \cite{tiseo2020,tiseo2021}, which will reach the minimum distance from the desired pose by entering in mechanical equilibrium with the kinematic constraints of the mechanism.

Another major implication of our results on the postural optimisation of limb-like mechanisms is deterministic, and it is fully defined based on the selection of the motion plane and the grasp constraints ($\hat{g}$ and $\hat{\phi}$). Therefore, different tasks can be classified by the principal direction of their energetic manifold, describing the flow of energy of the environment (i.e. power). As a consequence, the optimal posture is the one that adequately aligns the tangent space of the robot manifold described by \autoref{Power} with the task, that in the 7-DoF manipulators considered in this manuscript implies aligning the arm plane with the $\hat{v}$ and $\hat{f_\text{e}}$ defined by the task. This theoretical result finds confirmation in experimental results from motion capture available in the literature \cite{tiseo2018bipedal,tiseo2018modelling}. They proved that the human walking strategy could be derived by analysing the principal direction of the gravitational attractor acting on the centre of mass. They also used this observation to define a close form equation for propagating the mechanical wave in the gravitational field that was used to plan locomotion trajectories in bipeds and quadrupeds \cite{tiseo2018strange,Tiseo2018,Tiseo2019}. These implications could explain the SP and its impact on learning and generalising motor skills. \autoref{fig:C-grasp} shows the grasp strategy has a high impact on manipulability in areas close to the fully extended and contracted configurations. However, its influence is reduced in the middle of the arms reach. Thus, it indicates that there are two sets of tasks involved in controlling and optimising the posture of the limbs. The task assigned to the shoulder/hip and the elbow/knee mainly determines the alignment of the limb's kineto-dynamic characteristics with the task. The wrist/ankle strategy exploits the redundancy to adjust and compensate for local characteristics and perturbations. As mentioned earlier, PMP architectures that model the SP and generate human-like behaviour in a planar 3-link arm during reaching motion and in a spherical wrist for pointing tasks without null-space projectors \cite{tiseo2021,tiseo2021} have been proposed. These methods could be integrated with the proposed geometrical methods to control 7-DoF manipulators without requiring numerical optimisation.

\section{Conclusion}
We have proposed and verified geometrical solutions for the postural optimisation and inverse kinematics of limb-like mechanisms. The method is computationally efficient and robust to singularities, and it only relies on the knowledge of the task's plane. These characteristics make it relevant to robotics to improve the robustness and efficiency of interaction and computational neuroscience to develop methods that explain how the nervous system can control movements, learn new tasks, and generalise its knowledge.  Our future work will focus on integrating these control architectures with the presented postural optimisation to control a 3-D arm in performing human-like movements to verify if the nervous system implements a similar approach to motor control.  

\section{Acknowledgements}
This work has been supported by EPSRC UK RAI Hub ORCA (EP/R026173/1), National Centre for Nuclear Robotics (NCNR EPR02572X/1) and THING project in the EU Horizon 2020 (ICT-2017-1).
\vspace{1cm}
\section*{References}
\bibliography{main}

\begin{thebibliography}{10}
\providecommand{\url}[1]{#1}
\csname url@samestyle\endcsname
\providecommand{\newblock}{\relax}
\providecommand{\bibinfo}[2]{#2}
\providecommand{\BIBentrySTDinterwordspacing}{\spaceskip=0pt\relax}
\providecommand{\BIBentryALTinterwordstretchfactor}{4}
\providecommand{\BIBentryALTinterwordspacing}{\spaceskip=\fontdimen2\font plus
\BIBentryALTinterwordstretchfactor\fontdimen3\font minus
  \fontdimen4\font\relax}
\providecommand{\BIBforeignlanguage}[2]{{%
\expandafter\ifx\csname l@#1\endcsname\relax
\typeout{** WARNING: IEEEtran.bst: No hyphenation pattern has been}%
\typeout{** loaded for the language `#1'. Using the pattern for}%
\typeout{** the default language instead.}%
\else
\language=\csname l@#1\endcsname
\fi
#2}}
\providecommand{\BIBdecl}{\relax}
\BIBdecl

\bibitem{xin2020optimization}
G.~Xin, W.~Wolfslag, H.-C. Lin, C.~Tiseo, and M.~Mistry, ``An
  optimization-based locomotion controller for quadruped robots leveraging
  cartesian impedance control,'' \emph{Frontiers in Robotics and AI}, vol.~7,
  p.~48, 2020.

\bibitem{ijspeert2013dynamical}
A.~J. Ijspeert, J.~Nakanishi, H.~Hoffmann, P.~Pastor, and S.~Schaal,
  ``Dynamical movement primitives: learning attractor models for motor
  behaviors,'' \emph{Neural computation}, vol.~25, no.~2, pp. 328--373, 2013.

\bibitem{Averta2020}
G.~{Averta} and N.~{Hogan}, ``Enhancing robot-environment physical interaction
  via optimal impedance profiles,'' in \emph{2020 8th IEEE RAS/EMBS
  International Conference for Biomedical Robotics and Biomechatronics
  (BioRob)}, 2020, pp. 973--980.

\bibitem{ferrolho2019residual}
H.~Ferrolho, W.~Merkt, C.~Tiseo, and S.~Vijayakumar, ``Residual force polytope:
  Admissible task-space forces of dynamic trajectories,'' \emph{Robotics and
  Autonomous Systems}, 2021.

\bibitem{moura2019equivalence}
J.~Moura, V.~Ivan, M.~S. Erden, and S.~Vijayakumar, ``Equivalence of the
  projected forward dynamics and the dynamically consistent inverse solution.''
  in \emph{Robotics: Science and Systems}, 2019.

\bibitem{Wolfslag2020}
W.~J. Wolfslag, C.~McGreavy, G.~Xin, C.~Tiseo, S.~Vijayakumar, and Z.~Li,
  ``Optimisation of body-ground contact for augmenting the whole-body
  loco-manipulation of quadruped robots,'' in \emph{2020 IEEE/RSJ International
  Conference on Intelligent Robots and Systems (IROS)}, 2020, pp. 3694--3701.

\bibitem{yan2021decentralized}
L.~Yan, T.~Stouraitis, and S.~Vijayakumar, ``Decentralized ability-aware
  adaptive control for multi-robot collaborative manipulation,'' \emph{IEEE
  Robotics and Automation Letters}, vol.~6, no.~2, pp. 2311--2318, 2021.

\bibitem{mitrovic2010adaptive}
D.~Mitrovic, S.~Klanke, and S.~Vijayakumar, ``Adaptive optimal feedback control
  with learned internal dynamics models,'' in \emph{From Motor Learning to
  Interaction Learning in Robots}.\hskip 1em plus 0.5em minus 0.4em\relax
  Springer, 2010, pp. 65--84.

\bibitem{Li2018}
Y.~Li, G.~Ganesh, N.~Jarrass{\'e}, S.~Haddadin, A.~Albu-Schaeffer, and
  E.~Burdet, ``Force, impedance, and trajectory learning for contact tooling
  and haptic identification,'' \emph{IEEE Transactions on Robotics}, vol.~34,
  no.~5, pp. 1170--1182, 2018.

\bibitem{Ferrolho2020}
H.~Ferrolho, W.~Merkt, V.~Ivan, W.~Wolfslag, and S.~Vijayakumar, ``Optimizing
  dynamic trajectories for robustness to disturbances using polytopic
  projections,'' in \emph{2020 IEEE/RSJ International Conference on Intelligent
  Robots and Systems (IROS)}, 2020, pp. 7477--7484.

\bibitem{Kronander2016}
K.~Kronander and A.~Billard, ``Stability considerations for variable impedance
  control,'' \emph{IEEE Transactions on Robotics}, vol.~32, no.~5, pp.
  1298--1305, 2016.

\bibitem{nakanishi2011stiffness}
J.~Nakanishi, K.~Rawlik, and S.~Vijayakumar, ``Stiffness and temporal
  optimization in periodic movements: An optimal control approach,'' in
  \emph{2011 IEEE/RSJ International Conference on Intelligent Robots and
  Systems}.\hskip 1em plus 0.5em minus 0.4em\relax IEEE, 2011, pp. 718--724.

\bibitem{Mastalli2020}
C.~Mastalli, R.~Budhiraja, W.~Merkt, G.~Saurel, B.~Hammoud, M.~Naveau,
  J.~Carpentier, L.~Righetti, S.~Vijayakumar, and N.~Mansard, ``Crocoddyl: An
  efficient and versatile framework for multi-contact optimal control,'' in
  \emph{2020 IEEE International Conference on Robotics and Automation (ICRA)},
  2020, pp. 2536--2542.

\bibitem{siciliano2010robotics}
B.~Siciliano, L.~Sciavicco, L.~Villani, and G.~Oriolo, \emph{Robotics:
  modelling, planning and control}.\hskip 1em plus 0.5em minus 0.4em\relax
  Springer Science \& Business Media, 2010.

\bibitem{Angelini2019}
F.~Angelini, G.~Xin, W.~J. Wolfslag, C.~Tiseo, M.~Mistry, M.~Garabini,
  A.~Bicchi, and S.~Vijayakumar, ``Online optimal impedance planning for legged
  robots,'' in \emph{2019 IEEE/RSJ International Conference on Intelligent
  Robots and Systems (IROS)}, 2019, pp. 6028--6035.

\bibitem{aghili2015projection}
F.~Aghili, ``Projection-based modeling and control of mechanical systems using
  non-minimum set of coordinates,'' in \emph{2015 IEEE/RSJ International
  Conference on Intelligent Robots and Systems (IROS)}.\hskip 1em plus 0.5em
  minus 0.4em\relax IEEE, 2015, pp. 3164--3169.

\bibitem{nakanishi2008international}
J.~Nakanishi, R.~Cory, M.~Mistry, J.~Peters, and S.~Schaal, ``Operational space
  control: A theoretical and empirical comparison,'' \emph{The International
  Journal of Robotics Research}, vol.~27, no.~6, pp. 737--757, 2008.

\bibitem{de2005operational}
V.~De~Sapio and O.~Khatib, ``Operational space control of multibody systems
  with explicit holonomic constraints,'' in \emph{Proceedings of the 2005 IEEE
  International Conference on Robotics and Automation}.\hskip 1em plus 0.5em
  minus 0.4em\relax IEEE, 2005, pp. 2950--2956.

\bibitem{featherstone1997load}
R.~Featherstone and O.~Khatib, ``Load independence of the dynamically
  consistent inverse of the jacobian matrix,'' \emph{The International Journal
  of Robotics Research}, vol.~16, no.~2, pp. 168--170, 1997.

\bibitem{khatib1983dynamic}
O.~Khatib, ``Dynamic control of manipulator in operational space,'' in
  \emph{Proc. 6th IFToMM World Congress on Theory of Machines and Mechanisms},
  1983, pp. 1128--1131.

\bibitem{dietrich2018hierarchical}
A.~Dietrich, C.~Ott, and J.~Park, ``The hierarchical operational space
  formulation: Stability analysis for the regulation case,'' \emph{IEEE
  Robotics and Automation Letters}, vol.~3, no.~2, pp. 1120--1127, 2018.

\bibitem{de2006task}
V.~De~Sapio, O.~Khatib, and S.~Delp, ``Task-level approaches for the control of
  constrained multibody systems,'' \emph{Multibody System Dynamics}, vol.~16,
  no.~1, pp. 73--102, 2006.

\bibitem{dehio2018modeling}
N.~Dehio, J.~Smith, D.~L. Wigand, G.~Xin, H.-C. Lin, J.~J. Steil, and
  M.~Mistry, ``Modeling and control of multi-arm and multi-leg robots:
  Compensating for object dynamics during grasping,'' in \emph{2018 IEEE
  International Conference on Robotics and Automation (ICRA)}.\hskip 1em plus
  0.5em minus 0.4em\relax IEEE, 2018, pp. 294--301.

\bibitem{tiseo2020}
C.~Tiseo, W.~Merkt, W.~Wolfslag, S.~Vijayakumar, and M.~Mistry, ``Safe and
  compliant control of redundant robots using superimposition of passive
  task-space controllers,'' \emph{arXiv preprint arXiv:2002.12249}, 2020.

\bibitem{babarahmati2019}
K.~K. Babarahmati, C.~Tiseo, J.~Smith, H.~C. Lin, M.~S. Erden, and M.~Mistry,
  ``Fractal impedance for passive controllers,'' \emph{arXiv preprint
  arXiv:1911.04788}, 2019.

\bibitem{babarahmati2020}
K.~K. Babarahmati, C.~Tiseo, Q.~Rouxel, Z.~Li, and M.~Mistry, ``Robust
  high-transparency haptic exploration for dexterous telemanipulation,'' in
  \emph{Proc. IEEE International Conference on Robotics and Automation (ICRA)},
  2021.

\bibitem{Tiseo2020Bio}
C.~Tiseo, W.~Merkt, K.~K. Babarahmati, W.~Wolfslag, S.~Vijayakumar, and
  M.~Mistry, ``Bio-mimetic adaptive force/position control using fractal
  impedance,'' in \emph{2020 8th IEEE RAS/EMBS International Conference for
  Biomedical Robotics and Biomechatronics (BioRob)}, 2020, pp. 1180--1187.

\bibitem{tiseo2020Planner}
C.~Tiseo, V.~Ivan, W.~Merkt, I.~Havoutis, M.~Mistry, and S.~Vijayakumar, ``A
  passive navigation planning algorithm for collision-free control of mobile
  robots,'' in \emph{Proc. IEEE International Conference on Robotics and
  Automation (ICRA)}, 2021.

\bibitem{Xin2020}
G.~Xin, C.~Tiseo, W.~Wolfslag, J.~Smith, O.~Cebe, Z.~Li, S.~Vijayakumar, and
  M.~Mistry, ``Variable autonomy of whole-body control for inspection and
  intervention in industrial environments using legged robots,'' in \emph{2020
  IEEE 16th International Conference on Automation Science and Engineering
  (CASE)}, 2020, pp. 1415--1420.

\bibitem{hogan2018impedance}
N.~Hogan and S.~P. Buerger, ``Impedance and interaction control,'' in
  \emph{Robotics and automation handbook}.\hskip 1em plus 0.5em minus
  0.4em\relax CRC press, 2018, pp. 375--398.

\bibitem{Tiseo2018}
C.~Tiseo, K.~C. Veluvolu, and A.~W. Tech, ``Evidence of a “clock”
  determining human locomotion,'' in \emph{2018 40th Annual International
  Conference of the IEEE Engineering in Medicine and Biology Society (EMBC)},
  2018, pp. 1693--1696.

\bibitem{tiseo2018modelling}
C.~Tiseo, ``Modelling of bipedal locomotion for the development of a compliant
  pelvic interface between human and a balance assistant robot,''
  \emph{Doctoral thesis}, 2018.

\bibitem{tiseo2018bipedal}
C.~Tiseo, K.~Veluvolu, and W.~Ang, ``The bipedal saddle space: modelling and
  validation,'' \emph{Bioinspiration \& biomimetics}, vol.~14, no.~1, p.
  015001, 2018.

\bibitem{tiseo2021}
C.~Tiseo, S.~R. Charitos, and M.~Mistry, ``Theoretical evidence supporting
  harmonic reaching trajectories,'' in \emph{2021 10th International IEEE/EMBS
  Conference on Neural Engineering (NER)}.\hskip 1em plus 0.5em minus
  0.4em\relax IEEE, 2021.

\bibitem{tiseo2021exploiting}
------, ``Exploiting spherical projections to generate human-like wrist
  pointing movements,'' in \emph{2021 43rd Annual International Conference of
  the IEEE Engineering in Medicine and Biology Society (EMBC)}, 2021.

\bibitem{guigon2007computational}
E.~Guigon, P.~Baraduc, and M.~Desmurget, ``Computational motor control:
  redundancy and invariance,'' \emph{Journal of neurophysiology}, vol.~97,
  no.~1, pp. 331--347, 2007.

\bibitem{tommasino2017extended}
P.~Tommasino and D.~Campolo, ``An extended passive motion paradigm for
  human-like posture and movement planning in redundant manipulators,''
  \emph{Frontiers in neurorobotics}, vol.~11, p.~65, 2017.

\bibitem{tommasino2017task}
------, ``Task-space separation principle: a force-field approach to motion
  planning for redundant manipulators,'' \emph{Bioinspiration \& biomimetics},
  vol.~12, no.~2, p. 026003, 2017.

\bibitem{flash2005motor}
T.~Flash and B.~Hochner, ``Motor primitives in vertebrates and invertebrates,''
  \emph{Current opinion in neurobiology}, vol.~15, no.~6, pp. 660--666, 2005.

\bibitem{benati1982inverse}
M.~Benati, P.~Morasso, and V.~Tagliasco, ``The inverse kinematic problem for
  anthropomorphic manipulator arms,'' \emph{ASME. J. Dyn. Sys., Meas.,
  Control}, 1982.

\bibitem{gong2019analytical}
M.~Gong, X.~Li, and L.~Zhang, ``Analytical inverse kinematics and self-motion
  application for 7-dof redundant manipulator,'' \emph{Ieee Access}, vol.~7,
  pp. 18\,662--18\,674, 2019.

\bibitem{tiseo2018strange}
C.~Tiseo, M.~J. Foo, K.~C. Veluvolu, and W.~T. Ang, ``The strange attractor of
  bipedal locomotion and consequences on motor control,'' \emph{arXiv preprint
  arXiv:1802.03498}, 2018.

\bibitem{Tiseo2019}
C.~Tiseo, S.~Vijayakumar, and M.~Mistry, ``Analytic model for quadruped
  locomotion task-space planning*,'' in \emph{2019 41st Annual International
  Conference of the IEEE Engineering in Medicine and Biology Society (EMBC)},
  2019, pp. 5301--5304.

\end{thebibliography}
\bibliographystyle{IEEEtran} 

\end{document}